\newtheorem{theorem}{Theorem}
\newcommand{\mypara}{\vspace*{-10pt}\paragraph}
\newcommand{\rand}{\texttt{rand}\xspace}
\newcommand{\traj}{\texttt{traj}\xspace}
\newcommand{\cat}{\texttt{cat}\xspace}
\newcommand{\heur}{\texttt{heur}\xspace}
\newcommand{\datamatch}{\texttt{datamatch}\xspace}
\newcommand{\hum}{\texttt{hum}\xspace}
\newcommand{\yinda}{Zhang et al.~\cite{zhang2016render}\xspace}
\ificcvfinal\pagestyle{empty}\fi
\begin{document}

\title{Learning Where to Look: Data-Driven Viewpoint Set Selection for 3D Scenes}
\author{Kyle Genova}
\author{Manolis Savva}
\author{Angel X. Chang}
\author{Thomas Funkhouser}
\affil{Princeton University}
\maketitle

\begin{abstract}

The use of rendered images, whether from completely synthetic datasets
or from 3D reconstructions, is increasingly prevalent in
vision tasks. However, little attention has been given to how
the selection of viewpoints affects the performance of rendered
training sets.  In this paper, we propose a data-driven approach to
view set selection.  Given a set of example images, we extract
statistics describing their contents and generate a set of
views matching the distribution of those statistics.  Motivated by
semantic segmentation tasks, we model the spatial distribution of
each semantic object category within an image view volume.
We provide a search algorithm that generates a sampling of likely
candidate views according to the example distribution, and a set
selection algorithm that chooses a subset of the candidates that jointly cover
the example distribution.  Results of experiments with these algorithms
on SUNCG indicate that they are indeed able to produce view distributions
similar to an example set from NYUDv2 according to the earth mover's distance.
Furthermore, the selected views improve performance on semantic segmentation
compared to alternative view selection algorithms.

\end{abstract}

\section{Introduction}

Rendering of 3D scenes, both synthetic and reconstructed, is a
promising way to generate training data for deep learning methods
in computer vision.  For example, several methods have trained
on rendered images to improve the performance of semantic
segmentation networks \cite{zhang2016render}. The extent to
which the rendered images can be useful as training data
depends on the quality of the match between the rendered data and
real-world images.  For example, when rendering images for training
semantic segmentation algorithms, the statistics of object occurences,
sizes, and placements influence the outcome.

Yet, little previous work has investigated algorithms to select camera
views for the purposes of generating training sets for vision tasks.
Most previous work on view selection has focused on optimizing the
aesthetic properties of rendered images for applications in computer
graphics
\cite{bares2006photographic,bares2000virtual,christie2008camera,gooch2001artistic,olivier1999visual,liu2015composition}
or visible surface coverage for surveillance
\cite{mavrinac2013modeling} and surface reconstruction
\cite{zheng2014patchmatch,schonberger2016pixelwise}.

In this paper, we investigate a new view set selection problem
motivated by generating training sets for computer vision
tasks.  We pose the following view selection problem:
select a set of views whose ``distribution of image content'' best
matches a set of example images.  This problem statement defines
the objective as matching a latent distribution generating the example
set, rather than optimizing a particular function and thus is quite
different than previous work.  This requires choosing a representative
example set, defining a ``distribution of image content,''
and searching the infinite space of view sets for the best match.

In this paper, we investigate one concrete instance of the problem motivated by generating training data for semantic segmentation:
{\em to select a set of views where the pixel distribution of object
observations for specific semantic categories matches that of an
example image set.}
By matching the spatial distributions of object categories in example
images, we may be able to train deep networks more effectively than is
possible now with cameras placed heuristically \cite{zhang2016render}, along
trajectories \cite{mccormac2016scenenet}, or randomly \cite{handa2015scenenet}.


We propose an algorithm with three components to address the problem.
The first reduces a candidate image into a low-dimensional representation that models the distances between the spatial distributions of pixel sets for each semantic category.  The second suggests candidate positions in a 3D computer
graphics scene that are likely to yield rendered images with a given
pixel distribution of semantic categories.  The third uses submodular
optimization to select a set amongst the candidate views to match the
overall latent distribution of the example set.

Results of experiments with
this algorithm demonstrate that it is practical, efficient, and more
effective than traditional approaches at selecting camera views for
the SUNCG dataset \cite{song2017ssc} when asked to match the distribution of
views from NYUDv2 \cite{silberman2012indoor}. We find that our selected views are quantitatively more
similar to NYUDv2's than previous approaches attempting to model the distribution heuristically, and we show that training an
FCN network \cite{long2015fully} on them provides better performance than for
other view sets.  We make the following contributions:
\begin{itemize}
\setlength{\topsep}{0pt}
\setlength{\parsep}{0pt}
\setlength{\parskip}{0pt}
\setlength{\itemsep}{0pt}
\vspace{-0.8em}
\item We are the first to focus on the problem of viewpoint selection for synthetic dataset generation in the context of data-hungry tasks in computer vision.
\item We present a model for viewpoint selection that matches a data prior over object occurence patterns specified through example images of the real world.
\item We introduce an algorithm based on submodular optimization for selecting a set of images approximately optimizing a function measuring deviation from an statistical distribution derived from examples. 
\item  We present results demonstrating that our agent-agnostic algorithms for view set selection outperform previous alternatives that are manually tuned to match a particular agent.
\end{itemize}

\section{Related work}

View selection has been studied in several contexts, including camera placement in graphics, next-best view prediction and sensor placement in robotics, canonical views in perceptual psychology, among many others.

\mypara{Camera optimization in computer graphics.}
Early work in graphics has used information entropy as a measure of view quality to select good views of objects~\cite{vazquez2001viewpoint}.
Another line of work encodes image composition principles to optimize virtual camera placements such that they focus on specified objects in the frame and are judged to be aesthetically pleasing by people~\cite{bares2006photographic,bares2000virtual,christie2008camera,gooch2001artistic,olivier1999visual}.
More recent work has taken a similar approach in the more specific scenario of product images~\cite{liu2015composition}.
Freitag et al.~\cite{freitag2015comparison} compare several viewpoint quality estimation metrics on two virtual scenes, though none of the metrics involve matching real-world data semantics.
A related and rich body of work studies automated camera path planning for real-time graphics systems --- a good survey is given by Jankowski and Hachet~\cite{jankowski2015advances}.  Generally, these systems use manually encoded heuristics to create smooth camera trajectories.  Unlike our work, the focus is not selecting a set of static views, or on matching real-world view statistics based on object semantics.
In general, related work in computer graphics has not considered modeling camera priors based on real-world image data, and it has not evaluated the impact of camera viewpoints on computer vision tasks.

\mypara{Next-best view prediction and camera placement optimization.}
Next-best view prediction has been addressed in the context of various problems in robotics and computer vision: 3D reconstruction~\cite{krainin2011autonomous}, volumetric occupancy prediction~\cite{wu20153d}, and object pose estimation~\cite{doumanoglou2016recovering} among others.
However, the next-best view problem is predominantly addressed in an active sensing context where planning for the next most informative view is desired.  In contrast, we focus on the offline problem of selecting a set of static viewpoints given a 3D indoor scene dataset as input.
Therefore, we are more closely related to work in camera placement optimization.  The camera placement problem is typically cast as a version of the art gallery problem, seeking to minimize the number of cameras needed to ensure visual coverage of a target environment.  There is much prior work in this area --- a recent survey is provided by Mavrinac and Chen~\cite{mavrinac2013modeling}.  Similar problem formulations have been used in robotics for view planning of 3D sensing~\cite{blaer2007data}.  Our problem statement is distinct since we match the distributions of semantically meaningful objects instead of simply optimizing for coverage of a single input environment.

\mypara{Canonical views of objects and scenes.}
Internet image collections of particular objects were shown to mirror canonical viewpoint preferences by people~\cite{mezuman2012learning}.
Other work has conditioned the camera viewpoint on object keypoints to jointly predict the 3D object pose and the viewpoint~\cite{tulsiani2015viewpoints}.
Judgments of preferred views collected through crowdsourcing have been used to train a predictor for preferred viewpoints of objects~\cite{secord2011perceptual}.
Ehinger and Oliva~\cite{ehinger2011canonical} ask people to select a ``desirable'' (i.e., canonical) orientation within 2D panoramas and show that chosen views correlate highly with directions that maximize the visible volume and also coincide with prominent navigatable paths.
Our approach assumes that images taken by people for inclusion in image datasets exhibit a prior over natural viewpoints that we can extract and leverage to select views in 3D scenes.  

\mypara{Rendering synthetic 3D scenes for data generation.}
There is a recent explosion in generation of synthetic training data for many vision tasks including tracking, object recognition, semantic segmentation, pose estimation, and optical flow among others~\cite{choi2015robust,papon2015semantic,handa2012real,handa2014benchmark,handa2015scenenet,mayer2015large,ros2016synthia,shafaei2016play,richter2016playing}.
Work that rendered synthetic 3D images has focused on domains where the camera viewpoint is highly constrained (e.g., driving~\cite{ros2016synthia,shafaei2016play,richter2016playing}), has used manually specified camera trajectories (e.g., by recording first-person user trajectories in the virtual scene, or from camera paths in the real world~\cite{handa2012real,handa2014benchmark}), or has purportedly randomly sampled the space of camera views~\cite{handa2015scenenet}.
No previous paper has focused on view set selection or investigated its impact on trained models.


\begin{figure}
	\centering
	\includegraphics[width=\linewidth]{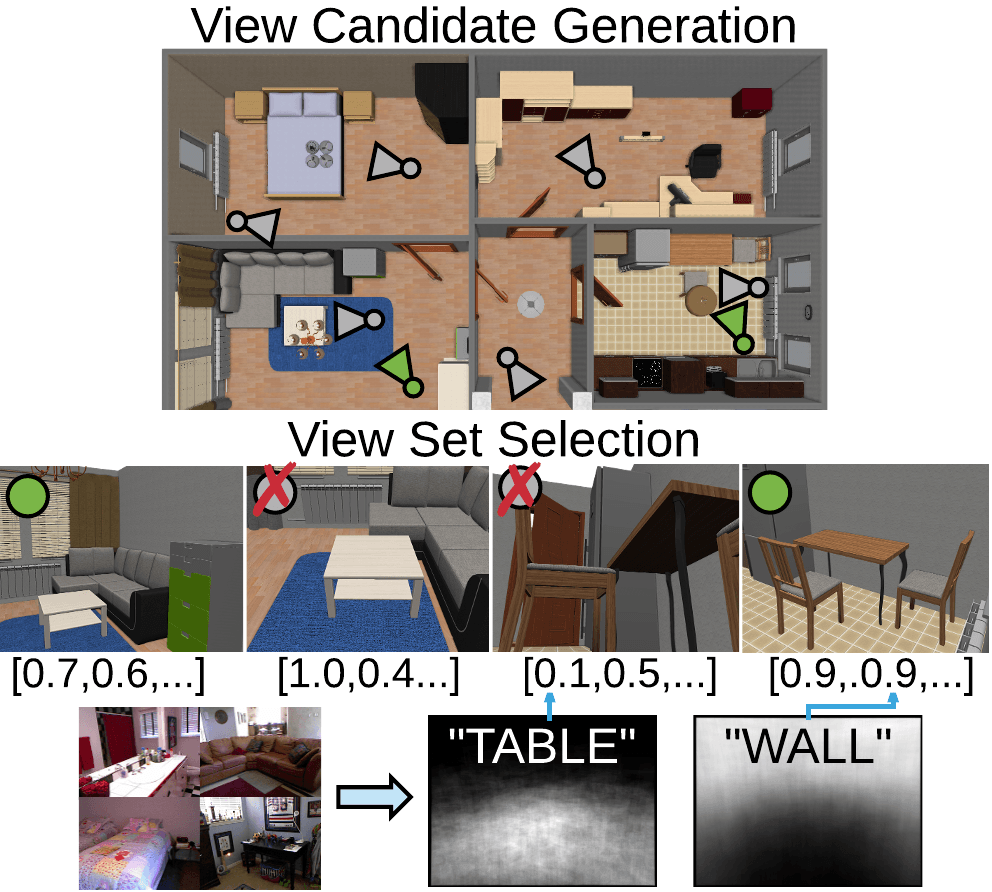}
	\caption{\textbf{Top}: candidate views in a target 3D scene with the selected output view set in green.  \textbf{Bottom:} category pdfs (right) from example set (left) are used to estimate image likelihood along multiple semantic axes.  \textbf{Middle:} a set of views is selected to jointly match the object distribution statistics of the input.}
	\label{fig:problem}
	\vspace{-1em}
\end{figure}

\section{Data-driven viewpoint selection}

In this paper, we investigate a new data-driven approach to view selection in synthetic scenes.  The input to our system is a 3D computer graphics model of an indoor scene (e.g., from SUNCG \cite{song2017ssc}) and a set of example RGB-D images containing semantic segmentations (e.g., from NYUDv2 \cite{silberman2012indoor}).  The output is a set of views (6 DoF camera poses), where the spatial pixel distribution of semantic objects in images rendered from the views ideally matches the distribution observed in the example set (Figure \ref{fig:problem}).

We model the views of the example set as random samples drawn from a latent distribution.   To model that distribution, we represent it as a set of $n$ probability density functions (pdfs) representing the x, y, and depth positions of pixels for each of $n$ semantic classes observed in the example set.  This pixel-level representation captures the spatial layout of different object classes in the example images, as depicted in \Cref{fig:problem} (bottom right shows a 2D representation of the 3D histogram associated with ``table'').

We select a set of views approximately covering an estimate of that distribution with the two step process depicted in Algorithm \ref{RGBDPriorAlg}.  During the first step, we generate candidate camera positions by sampling viewpoints according to pdfs dependent only on depths and categories of visible objects (i.e., the original pdfs with rotations and pixel xy marginalized).  During the second step, we choose a set of candidate views that approximately match the entire pdf as a set using an algorithm based on submodular maximization.  The following three subsections describe the pdf representation and the two algorithmic steps in detail.


\begin{algorithm}
	\caption{Viewpoint Generation with RGBD Priors }\label{RGBDPriorAlg}
	\begin{algorithmic}[1]
		\Function{Generate}{Scene,Room}
		\State Voxelize Room into 1000-10000 voxels $\mathcal{V}$
		\For{$v \in \mathcal{V}$}
		\State $W^{\mathcal{V}}_{v,:} \gets 0$
		\State $\mathcal{K}_: \gets 0$
		\For{$i \in [\# Samples]$}
		\State Sample Point $p^o \sim \mathcal{U}(v)$
		\State Sample Direction $d^o \sim \mathcal{A}$ 
		\State Ray $r \gets (p^o,d^o)$
		\If {$r$.Intersects(Scene)}
		\State$p \gets $ IntersectionPoint()
		\State $c \gets $ ObjectAt($p$).Category()
		\State $d \gets \|p - r_p\|_2$
		\State $W^{\mathcal{V}}_{v,c} \gets W^{\mathcal{V}}_{v,c} + \sum_{x}\sum_{y} f_c(x,y,d)$
		\State $\mathcal{K}_c \gets \mathcal{K}_c + 1$
		\EndIf
		\EndFor
		\State $W^{\mathcal{V}}_{v} \gets \sum_{c \in \mathcal{C}}  \frac{w_c}{\mathcal{K}_c} W^{\mathcal{V}}_{v,c}$
		\EndFor
		
		\For {$i \in [\#Candidate Cameras]$}
		\State Select Voxel $v \sim W^{\mathcal{V}}$
		\State Sample Eye $e \sim \mathcal{U}(v)$
		\State Sample Gaze $g \sim \mathcal{A}$
		\State Camera $c_i \gets (e,g)$
		\State Image $I \in \mathcal{I} \gets$ Render(Scene,$c_i$)
		\State $W^{\mathcal{I}}_i \gets \sum_{(x,y,d,c) \in \mathcal{I}} f_c(x,y,d)$
		\For {$c \in \mathcal{C}$}
			\State $W^{\mathcal{I}}_{i,c} \gets \frac{w_c\sum_{(x,y,d,c') \in \mathcal{I}} f_c(x,y,d) \mathbbm{1}(c' = c)}{\sum_{(x,y,d,c') \in \mathcal{I}} \mathbbm{1}(c' = c)}$ 
		\EndFor
		\EndFor
		
		\State Order $I_i$ by descending $\sum_{c \in \mathcal{C}} W^{\mathcal{I}}_{i,c}$
		\State \Return $I_1...I_{threshold}$
		\EndFunction
		\Function{Select}{$I \in \mathcal{I}$, $W^{\mathcal{I}}_{I,c}$, $h_c$,$k$}
			\For {$c \in \mathcal{C}$}
				\State $F_{c} \gets \textrm{MinHeap()}$
				\For {$i \in [h_c]$}
					\State Push $0$ onto $F_{c}$
				\EndFor
			\EndFor
			\State $S \gets \{\}$
			\For {$I \in \mathcal{I}$}
				\State $\Delta(I | S) \gets \sum_{c \in \mathcal{C}} W^{\mathcal{I}}_{i,c}$
			\EndFor
			\For {$i \in [k]$}
				\State Order $\mathcal{I}$ by descending $\Delta(I | S)$ as $\mathcal{I}_1...\mathcal{I}_{|\mathcal{I}|}$
				\For {$i \in |\mathcal{I}|$}
					\State $\Delta(\mathcal{I}_i | S) \gets \sum_{c \in \mathcal{C}} \max(W^{\mathcal{I}}_{i,c} - F_{c}\mathrm{.top()}, 0)$
					\If{$\Delta(\mathcal{I}_i | S) > \Delta(\mathcal{I}_{i+1} | S)$}
						\For {$c \in \mathcal{C}$}
							\State Pop the top element from $F_c$ as $T$
							\State Push $\max(T,W^{\mathcal{I}}_{i,c})$ onto $F_c$
						\EndFor
						\State Move $\mathcal{I}_i$ from $\mathcal{I}$ to $S$.
						\State \bf{break}
					\EndIf
				\EndFor
			\EndFor
			\State \Return \it{S}
		\EndFunction
	\end{algorithmic} 
\end{algorithm}

\subsection{Candidate representation}

The first issue to address with this data-driven approach is to choose a suitable representation for a view.  Ideally, the representation should be small, while retaining the essential information about what makes a view representative of ones in an example set -- i.e., it should be both concise and informative.   Of course, we could compute any feature vector to represent a view in our framework, including a single number based on an embedding, features from a deep network, or all the pixels of a rendered image.  However, motivated by applications of semantic segmentation and scene understanding, we choose to represent each view $I \in \mathcal{I}$ as a $n$-dimensional vector $w_{I,1...n}$, where each dimension encodes information about the likelihood of a single semantic category's contribution to a rendered image for the view.  Since our examples come from the training set of NYUDv2, we select $n$ to be 40, where each value $w_{I,c}$ corresponds to an NYU40 category.  


More formally, we represent the contribution from a category $c \in \mathcal{C}$ to an image $I \in \mathcal{I}$ as the average likelihood of the pixels $p=(x,y,d,c') \in I$ where $c' = c$. Categories not present in an image are assigned a score of zero.  We compute the likelihood of an observation $(x,y,d)$ conditioned on class $c$ as $f_c(x,y,d)$, the value of the pdf of category $c$ defined over the three dimensional view volume. It represents the likelihood of observing category $c$ at pixel locations $x$ and $y$ and depth $d$ in the view volume relative to all other points in the view volume, and is normalized with $\ell_1$ normalization. We approximate each function $f_c(x,y,d)$ as a 3-dimensional histogram.  For each category $c$, we compute its contribution to $w_{I}$ for any view $I$ by looking up the likelihood of $f_c(x,y,d)$ for every pixel $(x,y,d)$ in $I$ and taking the average.

Intuitively this representation encodes the likelihood of observing a particular spatial distribution within each semantic category.  This general approach could be applied to other properties of scene observations or other data.  For RGB images, where depth is not available, we can use a two-dimensional histogram, which is equivalent to eliminating the uncertainty by integrating over the depth dimension. Alternatively, one could define $f_c(x,y,d) := f_c(x,y)p_c(d)$ where $p_c(d)$ is a specified prior over depth, i.e. for the semantic category sofa and the goal of generating NYUDv2-like images a gaussian centered 1.5 meters away might be reasonable.  The details of $f_c(x,y,d)$ do not affect the rest of the algorithm, as long as it is nonnegative and larger values represent more desirable contributions, which are requirements for submodular maximization.



\subsection{Candidate generation}

Our next step is to generate a discrete set of candidate views.  Given a CAD model of an indoor scene and a method to map any particular view $I$ to an $n$-dimensional vector $w$ describing the likelihood of its scene observations with respect to all $n$ semantic categories (as described in the previous subsection), the goal is to generate a set of candidate views $\mathcal{I}$ that will form the input to the view set selection algorithm in the next step.

Ideally, the output candidate set $\mathcal{I}$ will contain all views likely to be selected for the final output (high recall), but with as few extras as possible (high precision).  Finding the optimal set is not tractable: the space of views is infinite (with 6 continuous degrees of freedom representing translations and rotations), and evaluating the representation for any given view requires rendering an image and counting pixels in each semantic category.  Thus, we utilize approximations leveraging the specific properties of our view representation.  As our feature vector $w$ for each view measures its likelihood in the example distribution along each of several axes, we can approximate overall likelihood for a view or set of views by averaging the feature vectors over all in the set.   In particular, we can estimate the likelihood for particular view positions by averaging over all views at that position, marginalizing rotations and pixel locations ($x,y$).   This allows us to first sample view positions with high estimated likelihood and then later choose rotations, which saves having to render specific images in the first step.

Our algorithm first voxelizes the 3D scene, weighting the voxels $v \in \mathcal{V}$ in proportion to our estimates of the aggregate feature vector (that is, taking a weighted sum over the 40 values $W^{\mathcal{V}}_{v,:}$ in the approximation). The approximation at a voxel is computed by casting a set of rays into the scene and intersecting them with scene geometry; each ray provides a measured category $c$ and depth $d$ relative to the ray origin. The contribution of this sample to the aggregate feature vector is $f_c$ integrated over $x,y$, thus approaching in the limit the likelihood of the depth distribution of an image with maximal field of view at the ray center. Before taking the aggregate sum, weights may be applied on a per-category basis to further bias the search space. In our implementation, we use weights set to rebalance the category frequencies in the 3D scenes, SUNCG, to the frequencies in the example set, NYUDv2. 

Once voxels have been weighted in proportion to their estimated likelihood of selection, we generate candidate views in voxels with probability proportional to the voxel weight. To select a view direction, we do uniform sampling, except for the camera tilt. As a data-driven prior is also available for tilt in our chosen example dataset, NYUDv2, through accelerometer data, we exploit it by estimating the accelerometer distribution $\mathcal{A}$ and selecting tilts in proportion to that distribution. This distribution is not a major requirement of the algorithm proposed here: a gaussian prior can likely perform similarly.

The final step to candidate generation is a view filter. Since there are many obviously poor samples drawn from the view distribution, candidates from each room are first filtered to those with the highest aggregate scores. This is useful because candidates can be generated in parallel, while set selection is sequential. Thus, far more candidates can be generated than evaluated, and a method to improve the average candidate quality is useful for reducing runtime.

We run the candidate generation algorithm for all scenes in a synthetic training set, and union the outputs to form an overall candidate set for the view selection algorithm in the final step.

\subsection{View set selection}

The last step of our process is to select a subset of the candidate views that jointly reflect the example distribution.

Defining the goal in this task is tricky, because the true distribution of views is not known -- only example images are provided.   Assuming the example natural image dataset was generated according to some latent distribution, an ideal set would have high probability of having been generated by this latent distribution, without relying on replicating identical views to those in the example set.  More specifically, because we wish to have the capability to select a set that is far larger than the given dataset, we must avoid the pitfall of selecting images that individually have high probability of coming from the distribution, but as a set are unlikely because either: 1) they are all near the mean of the distribution, or 2) they form clusters around one or more of the example images without bridging the support space in between.  Our goal is to extend the distribution without replicating it. 

The first of the above concerns alone justifies the principle utility of a set-based distribution matching algorithm: it is impossible for any algorithm to determine from a single image whether it was generated according to a particular latent distribution, even if the distribution is known. Therefore, in principle any approach to viewpoint selection that uses only an embedding and generation scheme cannot match an adversarial latent distribution; a set-aware approach is required. 

The second of the above concerns precludes any method that selects or encodes image sets based on the distance to the nearest neighbor in the example distribution. In order to get around both of these challenges simultaneously, 
we formalize the problem of selecting a set $S$ from the set of candidate images $\mathcal{I}$ given their 40 dimensional likelihood estimates $W^{\mathcal{I}}_{I,:}$ with the following submodular objective (written for simplicity to assume that all images have distinct score vectors):
\vspace{-0.7em}
\begin{equation}\label{eq:1}
\begin{gathered}
\max_{S \subseteq \mathcal{I}} \sum_{c \in \mathcal{C}} W^{\mathcal{I}}_{I,c} \mathbbm{1}(     \exists_{V \subset S,|V| \geq |S|-h_c}\forall_{v\in V} W^{\mathcal{I}}_{v,c} < W^{\mathcal{I}}_{I,c}) \\\mathrm{s.t.} \ |S| \leq k
\vspace{-1em}
\end{gathered}
\end{equation}

The integer values $h_c$ represent the relative weights of the categories to the optimization, and should increase with the desired output count $k$; the higher this proportion, the less a single high likelihood image can cover the output space. In our implementation, we set $h_c := \frac{k}{40}$ for all categories, as the task we compare to is semantic segmentation and mean IOU is the target metric. We next prove that this optimization is tractable, even at the scale of millions of images, by exploiting the submodular maximization approaches described in Krause et al \cite{krause2012submodular}.

\begin{theorem}
	Equation \eqref{eq:1} is a nonnegative monotone submodular objective function.
\end{theorem}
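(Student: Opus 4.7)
My plan is to first rewrite the objective in a transparent form, then verify nonnegativity, monotonicity, and submodularity using standard facts about top-$k$ sum functions. Under the stated distinct-score assumption, the indicator in \eqref{eq:1} is equivalent to requiring that at most $h_c - 1$ elements of $S$ strictly exceed $I$ in category $c$, i.e., that $I$ is among the $h_c$ highest-scoring members of $S$ for that category. Thus I would rewrite (reading the outer sum as being over $I \in S$, which matches the algorithm)
\[
F(S) = \sum_{c \in \mathcal{C}} F_c(S), \qquad F_c(S) = \sum_{I \in \mathrm{Top}_{h_c}(S;c)} W^{\mathcal{I}}_{I,c},
\]
where $\mathrm{Top}_{h_c}(S;c)$ denotes the (at most) $h_c$ elements of $S$ with the largest category-$c$ scores. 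Equivalently, padding $S$ with dummy elements of score $0$ turns $F_c(S)$ into the sum of a fixed $h_c$ values -- the exact invariant maintained by the min-heap $F_c$ in \textsc{Select}.

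Nonnegativity will be immediate: each $W^{\mathcal{I}}_{I,c}$ is an average of nonnegative histogram densities $f_c$, and $F$ is a nonnegative-weighted sum of such terms. For monotonicity I would introduce $\tau_c(S)$, the $h_c$-th largest padded score in $S$ for category $c$, and verify by a short case analysis (distinguishing $|S|<h_c$, $W^{\mathcal{I}}_{x,c}\leq\tau_c(S)$, and $W^{\mathcal{I}}_{x,c}>\tau_c(S)$) that
\[
F_c(S \cup \{x\}) - F_c(S) = \max\bigl(W^{\mathcal{I}}_{x,c} - \tau_c(S),\; 0\bigr),
\]
which is exactly the marginal-gain expression used in the greedy update of Algorithm~\ref{RGBDPriorAlg}. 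Being nonnegative, each $F_c$, and therefore $F$, is monotone.

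For submodularity, linearity lets me treat each $F_c$ separately, and the formula above reduces the task to showing that the threshold $\tau_c$ is monotone in its set argument: if $S \subseteq T$ then $\tau_c(S) \leq \tau_c(T)$. This will follow from the padded view, since the top-$h_c$ multiset of $T$ is obtained from that of $S$ by processing the elements of $T \setminus S$ one at a time, each step replacing the current minimum with the larger of itself and the new score -- a procedure that cannot decrease the minimum. The marginal-gain formula then yields
\[
\Delta(x \mid S) - \Delta(x \mid T) = \max(W^{\mathcal{I}}_{x,c} - \tau_c(S), 0) - \max(W^{\mathcal{I}}_{x,c} - \tau_c(T), 0) \geq 0,
\]
because $u \mapsto \max(W^{\mathcal{I}}_{x,c} - u, 0)$ is nonincreasing. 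Summing over $c \in \mathcal{C}$ then gives submodularity of $F$.

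The routine work is nonnegativity and writing out the marginal-gain identity. The main obstacle I anticipate is aligning the indicator-based definition in \eqref{eq:1} with the top-$h_c$ reformulation: in particular, pinning down the (implicit) outer sum over $I$, handling the boundary case $|S| \leq h_c$ via zero padding, and ensuring the argument is unaffected by ties when the ``distinct scores'' simplification is dropped (which can be patched by fixing any consistent tie-breaking rule, since the resulting $F$ is independent of the choice).
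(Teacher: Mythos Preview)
Your proposal is correct and follows essentially the same route as the paper: both arguments hinge on the observation that the marginal gain in category $c$ from adding $I$ is governed by the $h_c$-th largest score already present (your $\tau_c(S)$, the paper's $W^{\mathcal{I},A}_{h_c,c}$), and that this threshold can only increase as the set grows. Your treatment is somewhat more explicit---you write the marginal-gain identity with the $\max(\cdot,0)$ and handle the $|S|<h_c$ case via zero padding, whereas the paper leaves these implicit---but the key lemma (monotonicity of the $h_c$-th order statistic) and the overall decomposition over categories are identical.
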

\begin{proof}
	First, note that the function being maximized, which we refer to as $F(S) : S\subset \mathcal{I}\rightarrow \mathcal{R}$, is a summation over the weights $W^\mathcal{I}_{I,c}$. As those are nonnegative, the function itself is also nonnegative. Next, consider any two image sets $A$ and $B$, where $A \subset B \subseteq \mathcal{I}$. Take any image $I \in \mathcal{U} \setminus B$. We must show $\Delta(I | A) \geq \Delta(I | B)$. What can $I$ contribute to $\Delta(I | A)$? For any $c \in \mathcal{C}$, if the indicator function for the pair $I,c$ is positive, there are at most $h_c-1$ images $v \in V \subset A$ such that $W^\mathcal{I}_{v,c} > W^\mathcal{I}_{I,c}$. Order the weights $\{ W^\mathcal{I}_{v,c} : v \in A\}$ by decreasing value and name that ordering $W^{\mathcal{I},A}_{1,c},W^{\mathcal{I},A}_{2,c},...,W^{\mathcal{I},A}_{|A|,c}$. Similarly consider the same ordering for the corresponding category's weights for $B$: $W^{\mathcal{I},B}_{1,c},W^{\mathcal{I},B}_{2,c},...,W^{\mathcal{I},B}_{|B|,c}$. Then $W^{\mathcal{I},A}_{h_c,c} \leq W^{\mathcal{I},B}_{h_c,c}$, because $A \subset B$. So $W^{\mathcal{I}}_{I,c} - W^{\mathcal{I},A}_{h_c,c} \geq W^{\mathcal{I}}_{I,c} - W^{\mathcal{I},B}_{h_c,c}$. But these are the contributions of category $c$ to $\Delta(I | A)$ and $\Delta(I | B)$, respectively. Since this was done WLOG $c$, we have $\Delta(I | A) \geq \Delta(I | B)$.
	
	The final step is to show monotonicity. Consider again the proof that $\Delta(I | A) \geq \Delta(I | B)$. Substitute $B := A \cup \{e\}$ for any $e \in \mathcal{I}$. Then $\forall c \in \mathcal{C}, i \in [|A|] \ W^{\mathcal{I},A}_{i,c} \leq W^{\mathcal{I},A \cup \{e\}}_{i,c}$, implying $F(A) \leq F(A \cup \{e\})$.
\end{proof}
 
Since we have shown our objective is nonnegative, monotone, and submodular, there is a greedy $\frac{1}{2}$-approximation algorithm due to Nemhauser et al \cite{nemhauser1978analysis}.  The algorithm is adapted to this problem by computing the discrete derivative $\Delta(I | S)$ for a candidate image as the sum over categories $c$ of the marginal gain of $W^\mathcal{I}_{I,c}$ over the least quantity in $V$ still contributing to the optimization. We use a set of minheaps to maintain the sets quickly in practice, where each minheap always contains exactly $h_c$ values. The integer weights $h_c$ per category are selected by dividing the image budget $k$ with image counts evenly amongst all categories, though unbalanced weights, such as those derived by rounding from the relative frequencies of the categories in NYUDv2 training, are also possible.

\section{Experiments}

\begin{figure*}
	\vspace{-1.5em}
	\input{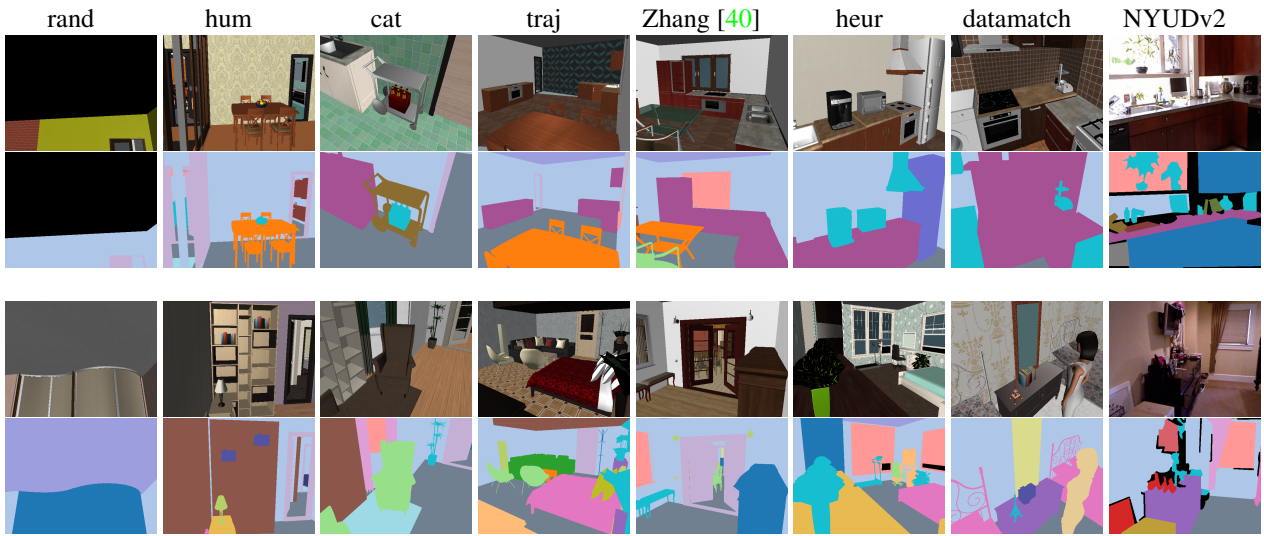}
	\caption{Each column shows viewpoints selected from SUNCG 3D scenes using a different selection algorithm (last column shows an example NYUDv2 image from the same room category).  The first and third rows show color, and the second and fourth show NYU40 category semantic segmentations.}
	\label{fig:view_examples}
	\vspace{-1em}
\end{figure*}

In this section we describe a series of experiments to evaluate how well different viewpoint selection algorithms match the statistics of real-world data, and how much difference view selection strategies can make on training performance for semantic segmentation on RGBD data.

\mypara{Datasets.}
For all our experiments, we use SUNCG~\cite{song2017ssc} as the source of synthetic 3D scenes to render.
We select a set of 171,496 rooms from the SUNCG scenes, by filtering for rooms that have a floor area of 8 to 100 squared meters, with at least four walls and five objects present.
We choose NYUDv2~\cite{silberman2012indoor} as our target data for our data matching algorithm and for evaluating semantic segmentation performance.
This allows us to test the benefit of both synthetic depth and synthetic color information.
We use the NYU40 category set as defined by Gupta et al.~\cite{gupta2014learning} for semantic segmentation and for computing all category-level statistics.
The standard NYUDv2 split of 795 training images and 654 testing images is used for our algorithm's data matching and the semantic segmentation evaluation.

\subsection{Comparison algorithms}

We compare images generated using variants of our algorithm against a variety of alternative methods for viewpoint selection including random cameras, single object closeup views, human-like navigation trajectories, and manual heuristics from prior work.  For each algorithm, we generate a set of twenty thousand camera views (see \Cref{fig:view_examples} for examples).  We describe the different methods below.

\mypara{Random cameras (\rand).}
For each room, a single random view is computed. A camera position is randomly sampled within the room volume, rejecting samples inside the axis aligned bounding box of scene objects. The view direction is randomly sampled uniformly.  This is similar to the approach described by Handa et al.~\cite{handa2015scenenet} except that they select viewpoints with at least 3 objects.

\mypara{Random human eye level (\hum).}
In addition to the purely random view selection, we also consider a baseline algorithm where we restrict the camera positions to be at an average eye height of 1.55 meters, corresponding to holding a camera just below eye height.

\mypara{Object closeups (\cat).}
The category-based camera approach focuses on spending the image budget equally across the NYU40 categories.  Four NYU40 categories not present in SUNCG, and the wall, ceiling, and floor categories are not allocated a budget. The remaining 33 categories are each allocated approximately $1/33$rd of the 20,000 images. For each image of a given category, an object is chosen with replacement from the set of all objects of that category. For each chosen object, an image focusing on that object from eye height is selected. A set of candidate images in a ring in the eye height plane around the object are generated, and the image with the greatest fraction of pixels belonging to the object is selected.

\mypara{Trajectories (\traj).}
Trajectory cameras are selected from an unobstructed path in the room. A two dimensional positional grid is imposed on each room, and grid points are flagged as obstructed if they are inside or close to the surface of an object. For each such room, a point is selected. The furthest grid point from that point under an unobstructed distance metric is selected as one endpoint of the trajectory. The second endpoint is selected to be the furthest point from that point, again constraining the candidate set to paths not within the obstacles of the room. Images are densely sampled from the trajectory. For each image, the view direction is towards the centroid of object centroids, weighted by the number of faces of each object. Views are subsampled randomly to reach 20K images.  At a high level, \traj is an approach similar to the two-body trajectory algorithm used by McCormac et al.~\cite{mccormac2016scenenet}.

\mypara{Zhang et al.~\cite{zhang2016render}.}
We compare against the views generated by Zhang et al.~\cite{zhang2016render}.  Their method scores views based on object count and pixel coverage, and filters raytraced images using a color and depth histogram similarity to NYUDv2 images.  The authors provided their generated view set which we subsampled randomly to match the total set size used for all other methods.

\mypara{Hand-tuned heuristic (\heur).}
We implement a hand-tuned heuristic that maximizes the number and pixel occupancy of object categories in the output views.  First, views are densely sampled at eye height in a grid of unobstructed positions in the room, sampling several view directions per position, each of which is given a fixed downward tilt.  Each sampled view is assigned a score $s = \sum_{i \in v_i}{\mathbb{1}(p_i > m) (\log(p_i)-\log(m))}$ where $p_i$ is the number of pixels of object $i$ in view $v_i$, and $m$ is a minimum number of pixels per object.  The top scoring images across all rooms are chosen as the training set.  Intuitively, this method attempts to maximize the salient information in the frame, and is an attempt at improving Zhang et al.'s method.

\mypara{Data matching (\datamatch).}
We apply our data-driven view set selection algorithm to match RGBD frames from the NYUDv2~\cite{silberman2012indoor} training images, in terms of the distribution within-frame of the NYU40 categories.  We rebalance the category frequencies in SUNCG to the frequencies of the NYU40 categories.

\begin{table}
\centering
\begin{tabular}{lcccc}
\toprule
method & depth & x & y  & mean \\
\midrule
\rand & 0.56 & 0.166 & 0.474 & 0.400\\
\hum & 0.404 & 0.169 & 0.244 & 0.272\\
\cat & 0.434 & 0.177 & 0.192 & 0.268\\
\traj & \textbf{0.346} & 0.107 & 0.145 & 0.199\\
\yinda & 0.377 & 0.100 & 0.212 & 0.230\\
\heur & 0.349 & 0.104 & 0.183 & 0.212\\
\datamatch & \textbf{0.346} & \textbf{0.094} & \textbf{0.134 }& \textbf{0.191}\\
\bottomrule
\end{tabular}
\caption{EMD with thresholded ground distances for each axis of object occurence, averaged over the NYU40 categories. Distances above are normalized to [0-1], where lower values are closer to NYUDv2.}
\vspace{-1.2em}
\label{table:emd}
\end{table}

\subsection{Evaluation with earth mover's distance}

We perform three types of experiments to investigate characteristics of our view set selection algorithm.  The first is focused on quantitative evaluation of how well it generates distributions of images matching an example set.  To address this question, we use an earth mover's distance (EMD) to measure the difference between two RGBD image sets.  Specifically, we use the thresholded distance metric as described in Pele et al.~\cite{pele2009fast} as it is particularly suitable for image data. For each category in an image set, we generate three separate 1D histograms: the $x$, $y$, and depth occurrences of that category throughout the set. For each such histogram, the EMD to the corresponding histogram from the training \& validation set of NYUDv2 is computed, and the results are averaged over all 40 NYUDv2 categories.

\Cref{table:emd} shows the results.  As we can see, the \datamatch algorithm achieves the lowest EMD to NYUDv2 frames overall.  The \traj algorithm performs second best, most likely due to the fact that NYUDv2 was collected by following human-feasible trajectories through real rooms.  In any case, this experiment confirms that our algorithm can produce distributions of images more similar to an example set than alternatives.

\begin{table*}[t]
\vspace{-1em}
\centering
\begin{tabular}{lcccc|cccc}
\toprule
 & \multicolumn{4}{c|}{HHA} & \multicolumn{4}{c}{RGB}\\
method & Pixel Acc. & Mean Acc. & Mean IoU & F.W. IoU & Pixel Acc. & Mean Acc. & Mean IoU & F.W. IoU \\
\cmidrule(lr){2-5}
\cmidrule(lr){6-9}
FCN-32s~\cite{long2015fully}        & 58.2 & 36.1 & 25.2 & 41.9 & 61.8 & 44.7 & 31.6 & 46.0 \\
\midrule
\rand                               &  6.1 &  3.1 &  0.6 &  1.3 & 59.3 & 41.0 & 28.4 & 44.2 \\
\hum                                & 60.0 & 39.8 & 28.1 & 44.3 & 61.4 & 42.2 & 30.0 & 45.7 \\
\cat                                & 61.3 & 41.6 & 29.8 & 45.5 & 62.6 & 44.5 & 31.6 & \textbf{46.9} \\
\traj                               & 60.7 & 40.9 & 28.6 & 44.7 & 61.2 & 42.5 & 30.2 & 45.1 \\
\yinda{\color{red} $^1$}                   & 61.3 & 41.9 & 29.8 & 45.5 & \textbf{62.7} & 43.7 & 31.8 & 46.3 \\
\heur                               & \textbf{61.7} & 42.2 & 30.1 & \textbf{45.9} & 62.6 & 44.5 & 31.7 & \textbf{46.9} \\
\datamatch                          & 61.2 & \textbf{42.7} & \textbf{30.4} & 45.1 & 62.4 & \textbf{45.1} & \textbf{32.1} & 46.8 \\
\bottomrule
\end{tabular}
\caption{{\bf Semantic segmentation results on the NYUDv2 test set after pretraining on different synthetic image datasets.}}
\label{table:seg_results}
\vspace{-1em}
\end{table*}

\subsection{Semantic segmentation results}

In our second experiment, we investigate the impact of our view set selection approach on the performance of pretrained networks for semantic segmentation.

\mypara{Procedure.}
We follow the training procedure of Zhang et al.~\cite{zhang2016render} to train a fully convolutional network~\cite{long2015fully}.
Initial weights are taken from FCN-VGG16, the weights of VGG 16-layer net~\cite{simonyan2015very} pretrained on ImageNet~\cite{deng2009imagenet} converted to an FCN-32s network.
We then pretrain this network using the images rendered from a given algorithm's 20k selected views in SUNCG for five epochs, and finally we finetune on the NYUDv2 training set for 100K iterations.
We run experiments using depth or color images.  For depth, we use the HHA depth encoding of Gupta et al.~\cite{gupta2014learning}.  For color images, we render an RGB image for each view using the OpenGL rendering toolkit provided by Song et al.~\cite{song2017ssc}.
We run all our experiments with the Caffe library~\cite{jia2014caffe} and the public FCN implementation~\cite{long2015fully}, on Nvidia Titan X GPUs.

\mypara{Results.}
\Cref{table:seg_results} reports weighted and unweighted accuracy and IoU metrics for the NYUDv2 test set, as defined by Long et al.~\cite{long2015fully}.  We see that good view selection makes a big difference for depth with \datamatch views improving the baseline FCN HHA mean IoU by 5.2 points (20.6\% relative improvement).  A smaller impact is seen for synthetic color, where \datamatch leads to a 0.6 point improvement (1.9\% relative improvement).  We hypothesize this is due to the large color disparity between the synthetic color images and the real world color.  For depth, using the HHA encoding, \heur and \datamatch both outperform the camera selection algorithm used in \cite{zhang2016render}\footnote{Reported HHA performance in Zhang et al. is significantly lower at 23.4\% mean IoU.}.  The \rand algorithm performs especially bad since the HHA encoding relies on a built-in assumption of a dominant front orientation of the camera.  Moreover, \hum, \traj, and \rand significantly reduce baseline performance in color indicating that pretraining with bad synthetic views can hurt instead of help.

\mypara{Influence of submodular optimization for set selection.}
To isolate the impact of the submodular optimization part of our algorithm we also compare against a version of \datamatch that only selects the best $k$ scoring images instead.  The views generated by this ablated version of the algorithm result in reducing performance on HHA to 61.8 pixel accuracy, 42.2 mean accuracy, 30.2 mean IoU, and 45.8 frequency weighted IoU.




\subsection{Human view selection judgments}


In a third experiment, we investigate the correlation between semantic segmentation performance and human view selection by running a study where we present people with a target set of NYUDv2 images in a given room type (e.g., bedroom) and ask them to select from a random set of five synthetic views sampled uniformly from the images of all seven algorithms considered in the previous experiments.  Our goal in this experiment is to characterize whether human judgement of good views is as good for semantic segmentation as our proposed algorithm.

We recruited 261 Amazon Mechanical Turk workers to select any number of synthetic images that match the types of views of the target NYUDv2 set.  Each worker selected images from a total of 30 random sets of five images.  Overall, the workers saw 27,564 images and selected 10,046 as matching the NYUDv2 target set.  To evaluate these results, we measure the fraction of time that an image from a given algorithm was selected as matching.  The mean values are \rand: 0.04, \hum: 0.29, \cat: 0.34, \traj: 0.51, \yinda: 0.50, \heur: 0.53, \datamatch: 0.41.  Interestingly, views produced by algorithms that provide greater EMD results and lesser semantic segmantion results than \datamatch are selected more often by people.  This phenomenon suggests that people might not be a good judge of how images fit into distributions, corroborating the need for automatic algorithms for this task.

\begin{figure}
	\centering
	\includegraphics[width=\linewidth]{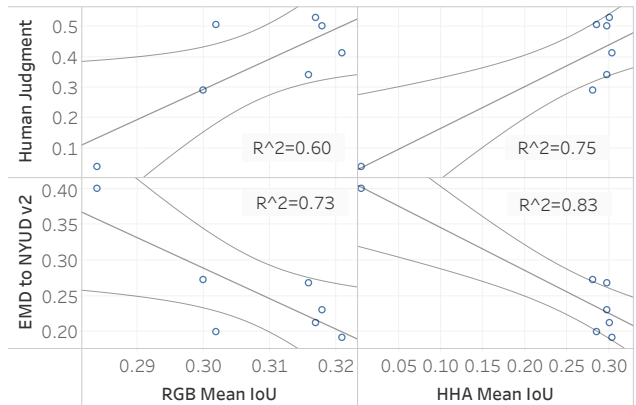}
	\caption{Semantic segmentation performance (left: RGB, right: HHA) plotted against both EMD distance of images to NYUDv2 and subjective human judgment of how close images are to the set of NYUDv2 images.}
	\label{fig:correlations}
\end{figure}

We compare the human judgments of match to NYUDv2 against EMD as predictors of semantic segmentation performance.  \Cref{fig:correlations} plots each algorithm as a point and shows the linear fit (along with 95\% confidence contours), and the $R^2$ measure of goodness of fit.  As expected, EMD is inversely correlated with segmentation performance, and human judgments are mostly positively correlated with performance.  The higher $R^2$ numbers for EMD-to-segmentation fit indicate that EMD is a better predictor of performance than human judgment.  The higher correlation of EMD to semantic segmentation performance suggests that our choice of image representation is effective.


\section{Conclusion}

In summary, this paper has investigated the idea of using data-driven
algorithms to select a set of views when creating synthetic datasets
of rendered images for pre-training deep networks.  It provides a formulation
for representing views based on pdfs of object observations of specific
semantic categories, generating candidate views with an efficient and
practical search algorithm, and selecting a set of views matching a
target distribution with submodular maximization.   Results of experiments
show that careful view select with this approach can improve the performance
of semantic segmentation networks.

This is just the tip of the iceberg on how to select views for
training deep networks for computer vision tasks.  One way to improve our algorithm
is to remove the independence assumption between categorical likelihoods
(i.e., explicitly model object co-occurence probabilities).  We chose a formulation
motivated by semantic segmentation, but others may be better suited for different
tasks.  We establish that view set selection is an important problem in this domain
and conjecture that much future work is warranted.

{\small
\bibliographystyle{ieee}
\bibliography{hcv}

\begin{thebibliography}{10}\itemsep=-1pt

\bibitem{bares2006photographic}
W.~Bares.
\newblock A photographic composition assistant for intelligent virtual {3D}
  camera systems.
\newblock In {\em International Symposium on Smart Graphics}, pages 172--183.
  Springer, 2006.

\bibitem{bares2000virtual}
W.~Bares, S.~McDermott, C.~Boudreaux, and S.~Thainimit.
\newblock Virtual {3D} camera composition from frame constraints.
\newblock In {\em Proceedings of the eighth ACM international conference on
  Multimedia}, pages 177--186. ACM, 2000.

\bibitem{blaer2007data}
P.~S. Blaer and P.~K. Allen.
\newblock Data acquisition and view planning for {3-D} modeling tasks.
\newblock In {\em Intelligent Robots and Systems, 2007. IROS 2007. IEEE/RSJ
  International Conference on}, pages 417--422. IEEE, 2007.

\bibitem{choi2015robust}
S.~Choi, Q.-Y. Zhou, and V.~Koltun.
\newblock Robust reconstruction of indoor scenes.
\newblock In {\em 2015 IEEE Conference on Computer Vision and Pattern
  Recognition (CVPR)}, pages 5556--5565. IEEE, 2015.

\bibitem{christie2008camera}
M.~Christie, P.~Olivier, and J.-M. Normand.
\newblock Camera control in computer graphics.
\newblock In {\em Computer Graphics Forum}, volume~27, pages 2197--2218. Wiley
  Online Library, 2008.

\bibitem{deng2009imagenet}
J.~Deng, W.~Dong, R.~Socher, L.-J. Li, K.~Li, and L.~Fei-Fei.
\newblock {ImageNet}: A large-scale hierarchical image database.
\newblock In {\em Proceedings of the IEEE Conference on Computer Vision and
  Pattern Recognition (CVPR)}, 2009.

\bibitem{doumanoglou2016recovering}
A.~Doumanoglou, R.~Kouskouridas, S.~Malassiotis, and T.-K. Kim.
\newblock Recovering {6D} object pose and predicting next-best-view in the
  crowd.
\newblock In {\em 2016 IEEE Conference on Computer Vision and Pattern
  Recognition (CVPR)}, 2016.

\bibitem{ehinger2011canonical}
K.~A. Ehinger and A.~Oliva.
\newblock Canonical views of scenes depend on the shape of the space.
\newblock 2011.

\bibitem{freitag2015comparison}
S.~Freitag, B.~Weyers, A.~B{\"o}nsch, and T.~W. Kuhlen.
\newblock Comparison and evaluation of viewpoint quality estimation algorithms
  for immersive virtual environments.
\newblock In {\em ICAT-EGVE}, pages 53--60, 2015.

\bibitem{gooch2001artistic}
B.~Gooch, E.~Reinhard, C.~Moulding, and P.~Shirley.
\newblock Artistic composition for image creation.
\newblock In {\em Rendering Techniques 2001}, pages 83--88. Springer, 2001.

\bibitem{gupta2014learning}
S.~Gupta, R.~Girshick, P.~Arbel{\'a}ez, and J.~Malik.
\newblock Learning rich features from {RGB-D} images for object detection and
  segmentation.
\newblock In {\em European Conference on Computer Vision}, pages 345--360.
  Springer, 2014.

\bibitem{handa2012real}
A.~Handa, R.~A. Newcombe, A.~Angeli, and A.~J. Davison.
\newblock Real-time camera tracking: When is high frame-rate best?
\newblock In {\em European Conference on Computer Vision}, pages 222--235.
  Springer, 2012.

\bibitem{handa2015scenenet}
A.~Handa, V.~Patraucean, V.~Badrinarayanan, S.~Stent, and R.~Cipolla.
\newblock Scene{N}et: Understanding real world indoor scenes with synthetic
  data.
\newblock {\em arXiv preprint arXiv:1511.07041}, 2015.

\bibitem{handa2014benchmark}
A.~Handa, T.~Whelan, J.~McDonald, and A.~J. Davison.
\newblock A benchmark for {RGB-D} visual odometry, {3D} reconstruction and
  {SLAM}.
\newblock In {\em 2014 IEEE International Conference on Robotics and Automation
  (ICRA)}, pages 1524--1531. IEEE, 2014.

\bibitem{jankowski2015advances}
J.~Jankowski and M.~Hachet.
\newblock Advances in interaction with {3D} environments.
\newblock In {\em Computer Graphics Forum}, volume~34, pages 152--190. Wiley
  Online Library, 2015.

\bibitem{jia2014caffe}
Y.~Jia, E.~Shelhamer, J.~Donahue, S.~Karayev, J.~Long, R.~Girshick,
  S.~Guadarrama, and T.~Darrell.
\newblock Caffe: Convolutional architecture for fast feature embedding.
\newblock In {\em Proceedings of the 22nd ACM international conference on
  Multimedia}, pages 675--678. ACM, 2014.

\bibitem{krainin2011autonomous}
M.~Krainin, B.~Curless, and D.~Fox.
\newblock Autonomous generation of complete {3D} object models using next best
  view manipulation planning.
\newblock In {\em Robotics and Automation (ICRA), 2011 IEEE International
  Conference on}, pages 5031--5037. IEEE, 2011.

\bibitem{krause2012submodular}
A.~Krause and D.~Golovin.
\newblock Submodular function maximization.
\newblock {\em Tractability: Practical Approaches to Hard Problems}, 3(19):8,
  2012.

\bibitem{liu2015composition}
T.~Liu, J.~McCann, W.~Li, and T.~Funkhouser.
\newblock Composition-aware scene optimization for product images.
\newblock In {\em Computer Graphics Forum}, volume~34, pages 13--24. Wiley
  Online Library, 2015.

\bibitem{long2015fully}
J.~Long, E.~Shelhamer, and T.~Darrell.
\newblock Fully convolutional networks for semantic segmentation.
\newblock In {\em Proceedings of the IEEE Conference on Computer Vision and
  Pattern Recognition}, pages 3431--3440, 2015.

\bibitem{mavrinac2013modeling}
A.~Mavrinac and X.~Chen.
\newblock Modeling coverage in camera networks: A survey.
\newblock {\em International journal of computer vision}, 101(1):205--226,
  2013.

\bibitem{mayer2015large}
N.~Mayer, E.~Ilg, P.~H{\"a}usser, P.~Fischer, D.~Cremers, A.~Dosovitskiy, and
  T.~Brox.
\newblock A large dataset to train convolutional networks for disparity,
  optical flow, and scene flow estimation.
\newblock {\em arXiv preprint arXiv:1512.02134}, 2015.

\bibitem{mccormac2016scenenet}
J.~McCormac, A.~Handa, S.~Leutenegger, and A.~J. Davison.
\newblock Scene{N}et {RGB-D}: {5M} photorealistic images of synthetic indoor
  trajectories with ground truth.
\newblock {\em arXiv preprint arXiv:1612.05079}, 2016.

\bibitem{mezuman2012learning}
E.~Mezuman and Y.~Weiss.
\newblock Learning about canonical views from internet image collections.
\newblock In {\em Advances in Neural Information Processing Systems}, pages
  719--727, 2012.

\bibitem{nemhauser1978analysis}
G.~L. Nemhauser, L.~A. Wolsey, and M.~L. Fisher.
\newblock An analysis of approximations for maximizing submodular set
  functions—i.
\newblock {\em Mathematical Programming}, 14(1):265--294, 1978.

\bibitem{olivier1999visual}
P.~Olivier, N.~Halper, J.~Pickering, and P.~Luna.
\newblock Visual composition as optimisation.
\newblock In {\em AISB symposium on AI and creativity in entertainment and
  visual art}, pages 22--30, 1999.

\bibitem{papon2015semantic}
J.~Papon and M.~Schoeler.
\newblock Semantic pose using deep networks trained on synthetic {RGB-D}.
\newblock In {\em Proceedings of the IEEE International Conference on Computer
  Vision}, pages 774--782, 2015.

\bibitem{pele2009fast}
O.~Pele and M.~Werman.
\newblock Fast and robust earth mover's distances.
\newblock In {\em Computer vision, 2009 IEEE 12th international conference on},
  pages 460--467. IEEE, 2009.

\bibitem{richter2016playing}
S.~R. Richter, V.~Vineet, S.~Roth, and V.~Koltun.
\newblock Playing for data: Ground truth from computer games.
\newblock {\em arXiv preprint arXiv:1608.02192}, 2016.

\bibitem{ros2016synthia}
G.~Ros, L.~Sellart, J.~Materzynska, D.~Vazquez, and A.~M. Lopez.
\newblock The {SYNTHIA} dataset: A large collection of synthetic images for
  semantic segmentation of urban scenes.
\newblock In {\em Proceedings of the IEEE Conference on Computer Vision and
  Pattern Recognition}, pages 3234--3243, 2016.

\bibitem{schonberger2016pixelwise}
J.~L. Sch{\"o}nberger, E.~Zheng, J.-M. Frahm, and M.~Pollefeys.
\newblock Pixelwise view selection for unstructured multi-view stereo.
\newblock In {\em European Conference on Computer Vision}, pages 501--518.
  Springer, 2016.

\bibitem{secord2011perceptual}
A.~Secord, J.~Lu, A.~Finkelstein, M.~Singh, and A.~Nealen.
\newblock Perceptual models of viewpoint preference.
\newblock {\em ACM Transactions on Graphics (TOG)}, 30(5):109, 2011.

\bibitem{shafaei2016play}
A.~Shafaei, J.~J. Little, and M.~Schmidt.
\newblock Play and learn: Using video games to train computer vision models.
\newblock {\em arXiv preprint arXiv:1608.01745}, 2016.

\bibitem{silberman2012indoor}
N.~Silberman, D.~Hoiem, P.~Kohli, and R.~Fergus.
\newblock Indoor segmentation and support inference from {RGBD} images.
\newblock In {\em European Conference on Computer Vision}, pages 746--760.
  Springer, 2012.

\bibitem{simonyan2015very}
K.~Simonyan and A.~Zisserman.
\newblock Very deep convolutional networks for large-scale image recognition.
\newblock In {\em Proceedings of the International Conference on Learning
  Representations (ICLR)}, 2015.

\bibitem{song2017ssc}
S.~Song, F.~Yu, A.~Zeng, A.~X. Chang, M.~Savva, and T.~Funkhouser.
\newblock Semantic scene completion from a single depth image.
\newblock In {\em Proceedings of 30th IEEE Conference on Computer Vision and
  Pattern Recognition}, 2017.

\bibitem{tulsiani2015viewpoints}
S.~Tulsiani and J.~Malik.
\newblock Viewpoints and keypoints.
\newblock In {\em 2015 IEEE Conference on Computer Vision and Pattern
  Recognition (CVPR)}, pages 1510--1519. IEEE, 2015.

\bibitem{vazquez2001viewpoint}
P.-P. V{\'a}zquez, M.~Feixas, M.~Sbert, and W.~Heidrich.
\newblock Viewpoint selection using viewpoint entropy.
\newblock In {\em VMV}, volume~1, pages 273--280, 2001.

\bibitem{wu20153d}
Z.~Wu, S.~Song, A.~Khosla, F.~Yu, L.~Zhang, X.~Tang, and J.~Xiao.
\newblock {3D ShapeNets}: A deep representation for volumetric shapes.
\newblock In {\em Proceedings of the IEEE Conference on Computer Vision and
  Pattern Recognition (CVPR)}, pages 1912--1920, 2015.

\bibitem{zhang2016render}
Y.~Zhang, S.~Song, E.~Yumer, M.~Savva, J.-Y. Lee, H.~Jin, and T.~Funkhouser.
\newblock Physically-based rendering for indoor scene understanding using
  convolutional neural networks.
\newblock In {\em Proceedings of 30th IEEE Conference on Computer Vision and
  Pattern Recognition}, 2017.

\bibitem{zheng2014patchmatch}
E.~Zheng, E.~Dunn, V.~Jojic, and J.-M. Frahm.
\newblock Patchmatch based joint view selection and depthmap estimation.
\newblock In {\em Proceedings of the IEEE Conference on Computer Vision and
  Pattern Recognition}, pages 1510--1517, 2014.

\end{thebibliography}
}\newpage

\appendix
\section*{Appendix}
\section{Example Viewpoint Sets}

The viewpoint set selection problem we pose in our paper takes as input a 3D scene and provides as output a set of viewpoints in that scene that best match a prior data distribution.  \Cref{fig:viewsets} shows two example 3D scenes and output viewpoint sets selected using our \datamatch algorithm.

\begin{figure*}
	\includegraphics[width=\linewidth]{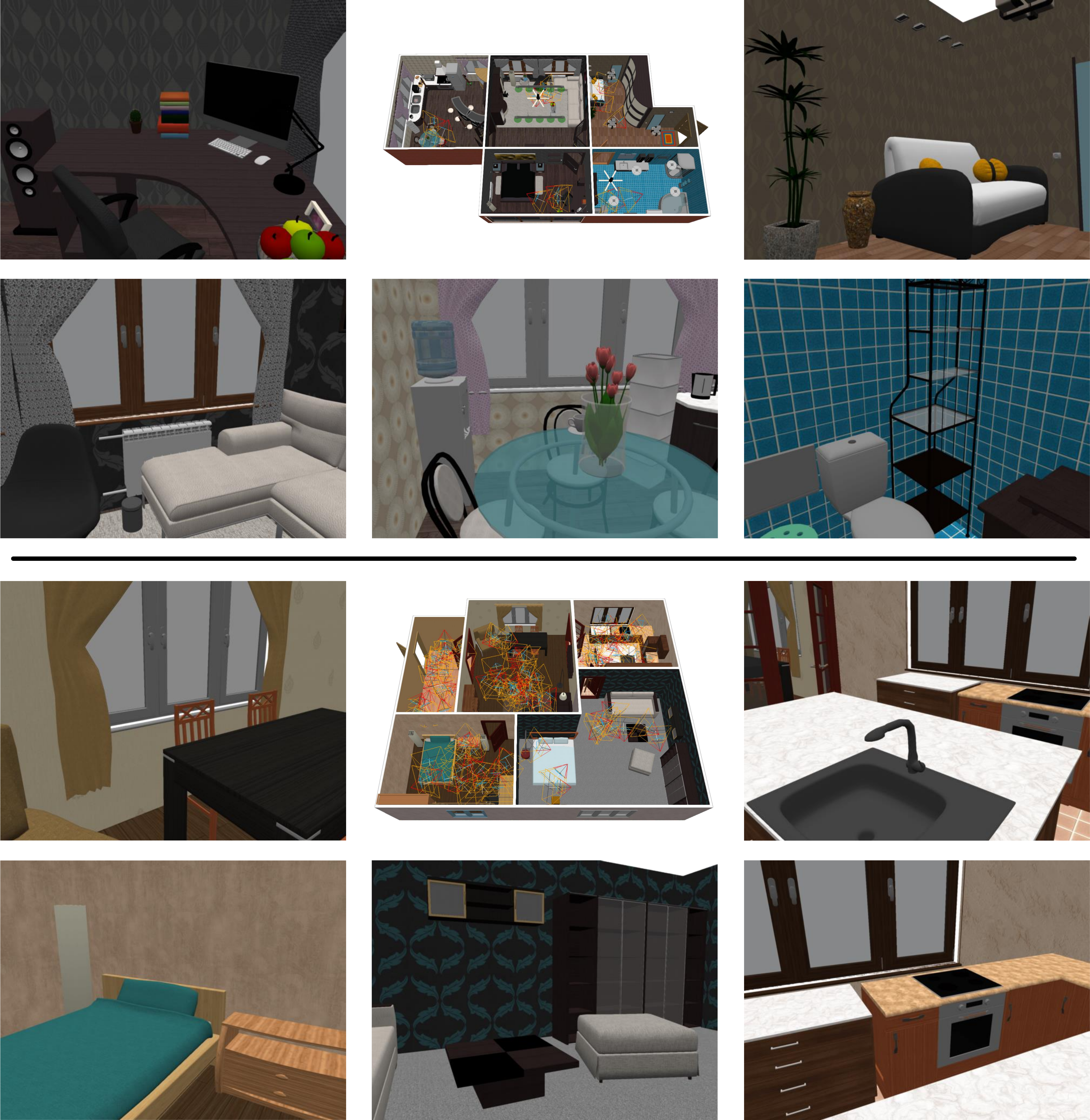}
	\caption{Two example 3D scenes and their viewpoint sets selected by the \datamatch algorithm trained on NYUDv2.  Viewpoints are shown in the overview images as view frustum cones drawn in red and orange outlines.  Corresponding images for a subset of the viewpoints in each are shown around the scene.  For the first scene, the top 3 views for each room were selected in the set, whereas for the second scene, the top 20 were selected.}
	\label{fig:viewsets}
\end{figure*}

\section{Single Image Matching}

The \datamatch algorithm was designed to match the object distributions seen in sets of images, and to generate sets of viewpoints that exemplify similar distributions.  However, it can be directly applied to single image inputs and restricted to generate one viewpoint that best matches the input, given a particular target 3D scene.  \Cref{fig:single_im} shows some example viewpoints generated to match a particular view of a kitchen counter.

\begin{figure*}
	\centering
	\begin{subfigure}[t]{0.45\linewidth}
		{%
		\setlength{\fboxsep}{2pt}%
		\setlength{\fboxrule}{2pt}%
		\fbox{\includegraphics[width=0.97\linewidth]{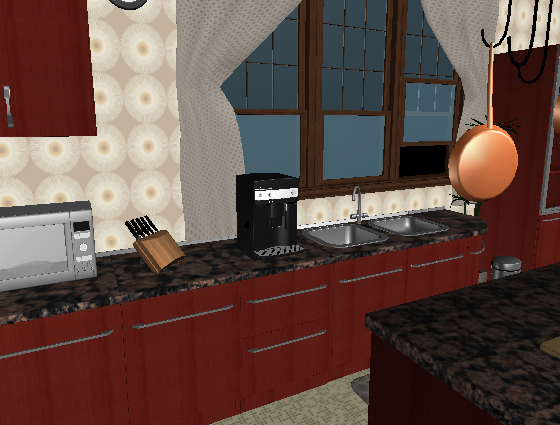}}%
		}
	\end{subfigure}
	\quad
	\begin{subfigure}[t]{0.45\linewidth}
		\includegraphics[width=\linewidth]{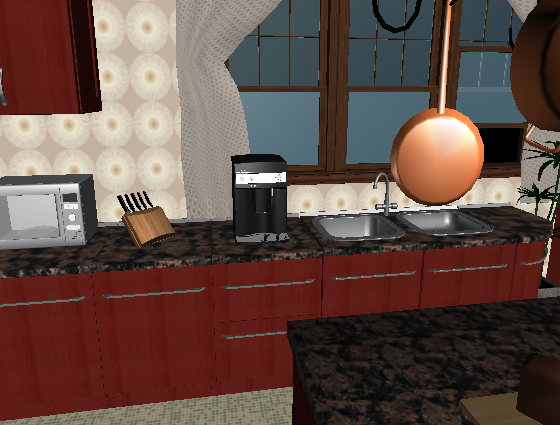}
	\end{subfigure}
	\vskip\baselineskip
	\begin{subfigure}[t]{0.45\linewidth}
		\includegraphics[width=\linewidth]{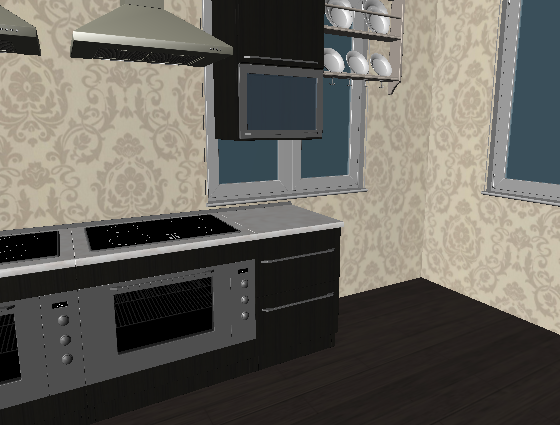}
	\end{subfigure}
	\quad
	\begin{subfigure}[t]{0.45\linewidth}
		\includegraphics[width=\linewidth]{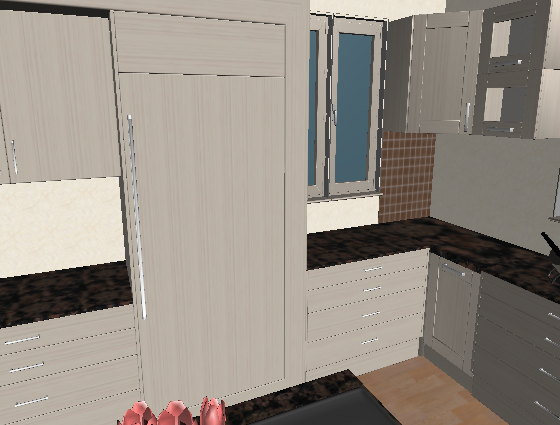}
	\end{subfigure}
	\vskip\baselineskip
	\begin{subfigure}[t]{0.45\linewidth}
		\includegraphics[width=\linewidth]{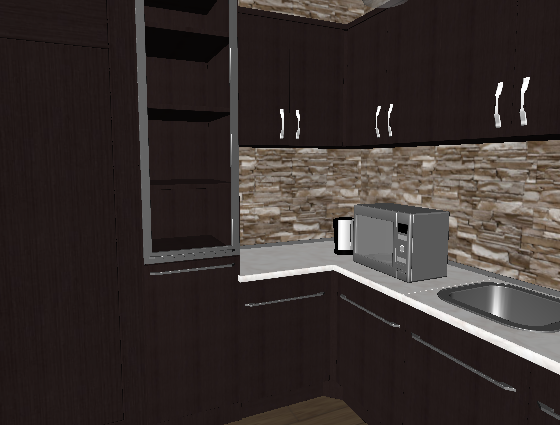}
	\end{subfigure}
	\quad
	\begin{subfigure}[t]{0.45\linewidth}
		\includegraphics[width=\linewidth]{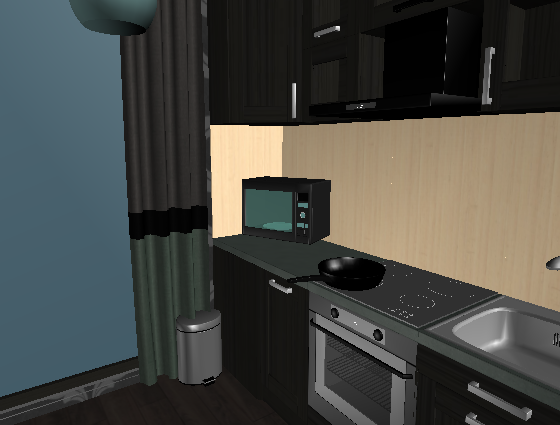}
	\end{subfigure}
	\caption{\textbf{Top left}: input image exemplifying desired framing.  \textbf{Top right}: output viewpoint in same room, generated using the \datamatch algorithm.  \textbf{Middle and bottom}: output viewpoints in other rooms.  Note that the objects seen and their framing in the view is similar to the input image, despite differences in the room layout.}
	\label{fig:single_im}
\end{figure*}

\section{RGB Image Matching}

A key strength of the \datamatch algorithm we present is that it can be directly applied to match viewpoint distributions in arbitrary target datasets.  Moreover, though the main paper evaluation targets the RGBD data in NYUDv2, using depth data is not a requirement of the algorithm.  We demonstrate how our algorithm generalizes to a viewpoint distribution that differs drastically from the views in NYUDv2.  To this end, we collected a small dataset of semantically annotated images from an RGB camera mounted on a Roomba robot and generated a viewpoint set for a living room SUNCG scene.  \Cref{fig:roombacam} shows the results. To replace the voxel weighting step, uniform voxel weights were used for all voxels in the 3D room volume; the sample count was not increased.

\begin{figure*}
	\centering
	\begin{subfigure}[t]{0.15\linewidth}
		\includegraphics[width=\linewidth]{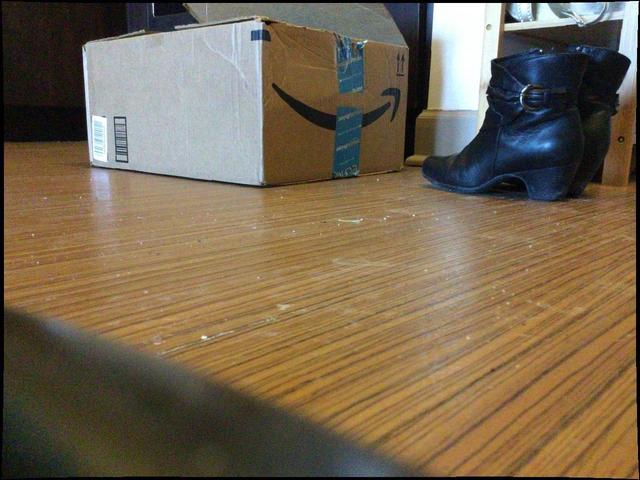}
	\end{subfigure}
	\begin{subfigure}[t]{0.15\linewidth}
		\includegraphics[width=\linewidth]{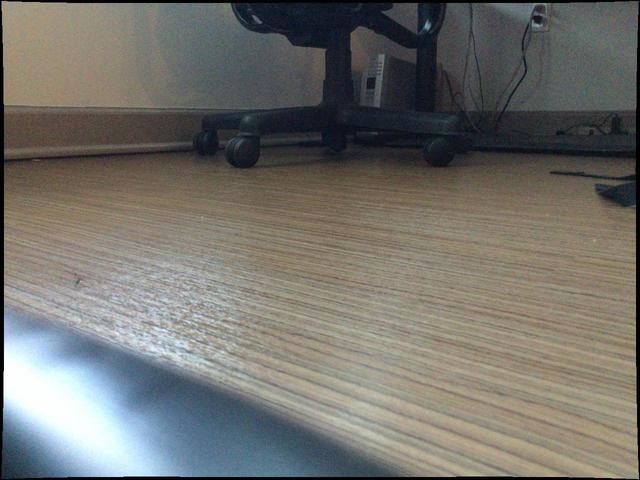}
	\end{subfigure}
	\begin{subfigure}[t]{0.15\linewidth}
		\includegraphics[width=\linewidth]{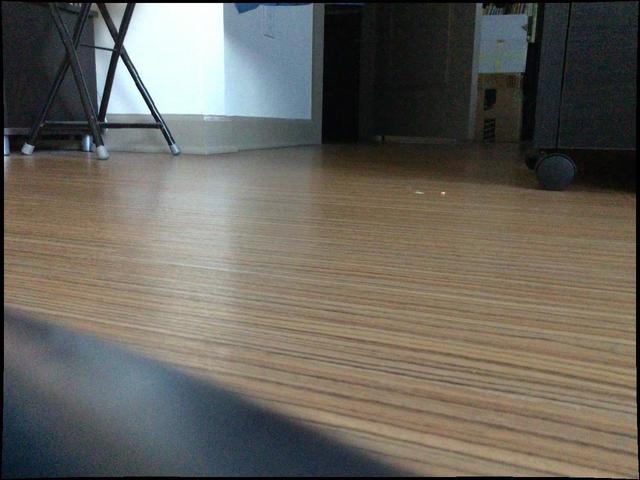}
	\end{subfigure}
	\begin{subfigure}[t]{0.15\linewidth}
		\includegraphics[width=\linewidth]{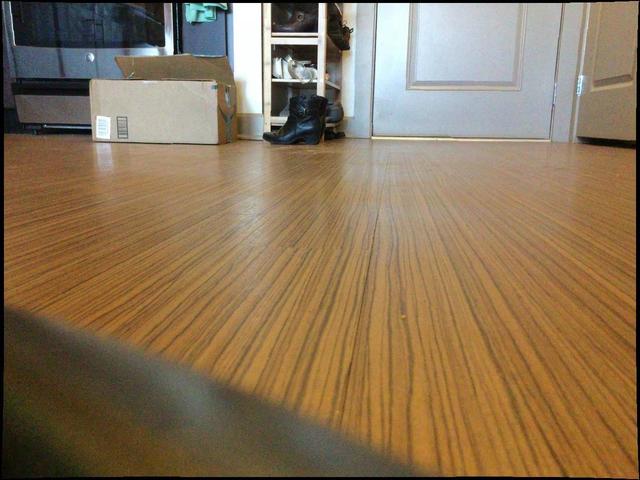}
	\end{subfigure}
	\begin{subfigure}[t]{0.15\linewidth}
		\includegraphics[width=\linewidth]{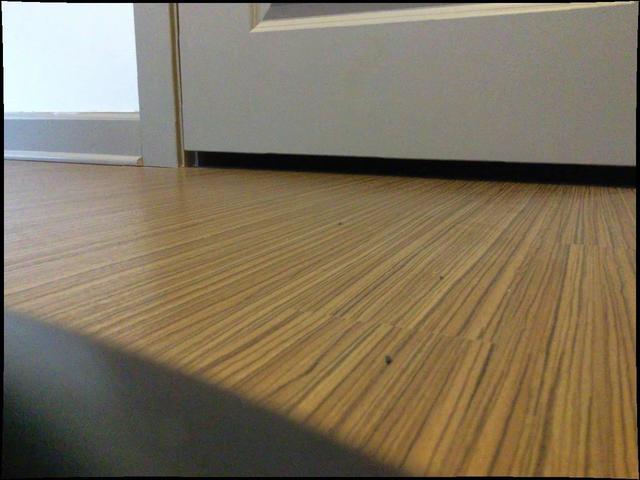}
	\end{subfigure}
	\begin{subfigure}[t]{0.15\linewidth}
		\includegraphics[width=\linewidth]{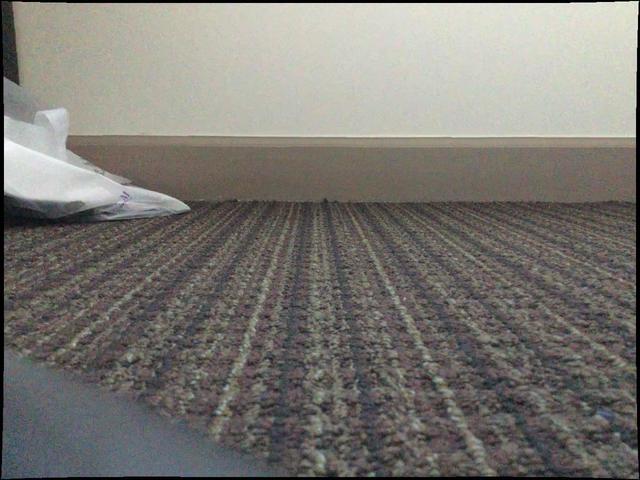}
	\end{subfigure}
	\\
	\begin{subfigure}[t]{0.15\linewidth}
		\includegraphics[width=\linewidth]{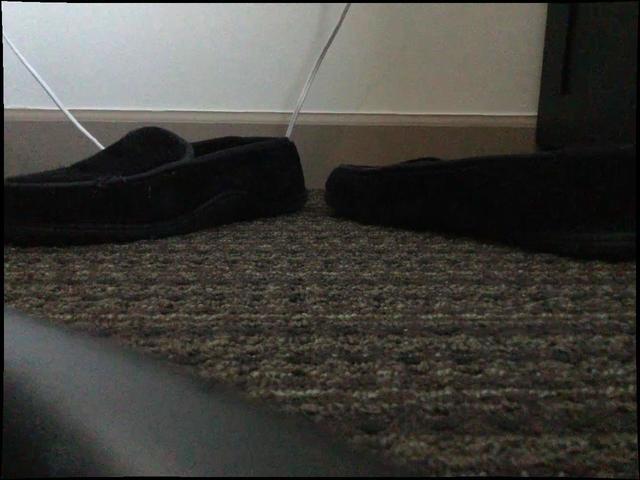}
	\end{subfigure}
	\begin{subfigure}[t]{0.15\linewidth}
		\includegraphics[width=\linewidth]{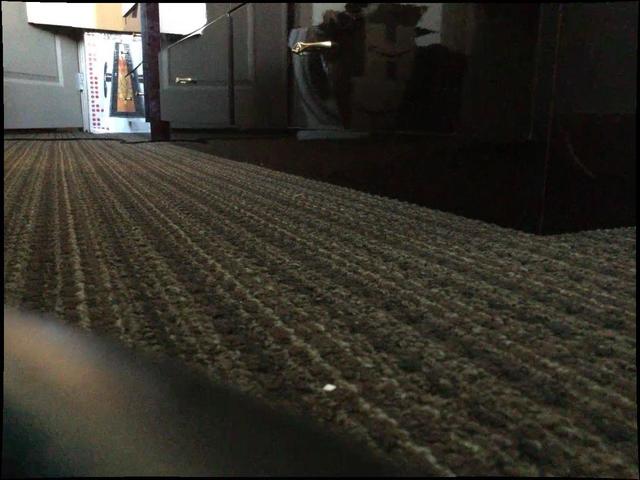}
	\end{subfigure}
	\begin{subfigure}[t]{0.15\linewidth}
		\includegraphics[width=\linewidth]{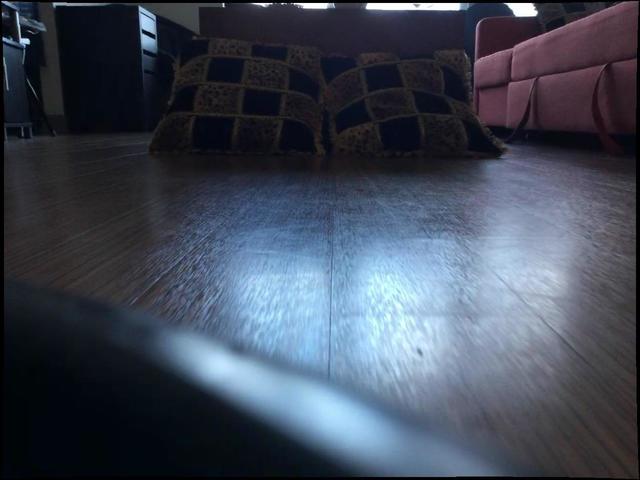}
	\end{subfigure}
	\begin{subfigure}[t]{0.15\linewidth}
		\includegraphics[width=\linewidth]{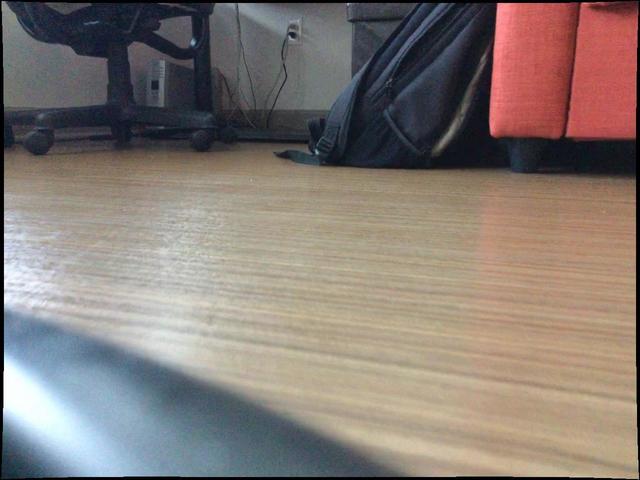}
	\end{subfigure}
	\begin{subfigure}[t]{0.15\linewidth}
		\includegraphics[width=\linewidth]{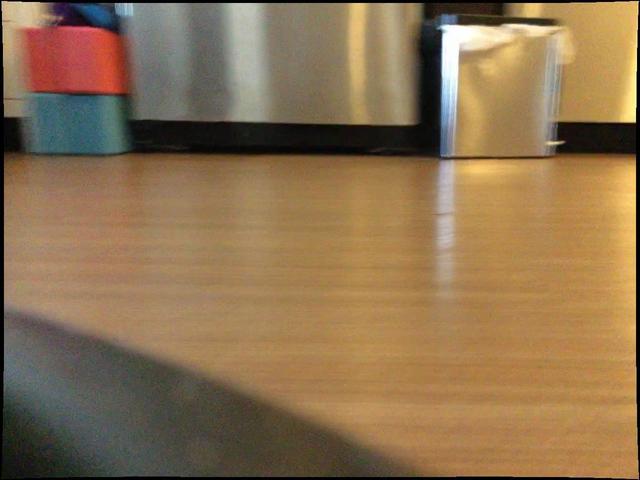}
	\end{subfigure}
	\begin{subfigure}[t]{0.15\linewidth}
		\includegraphics[width=\linewidth]{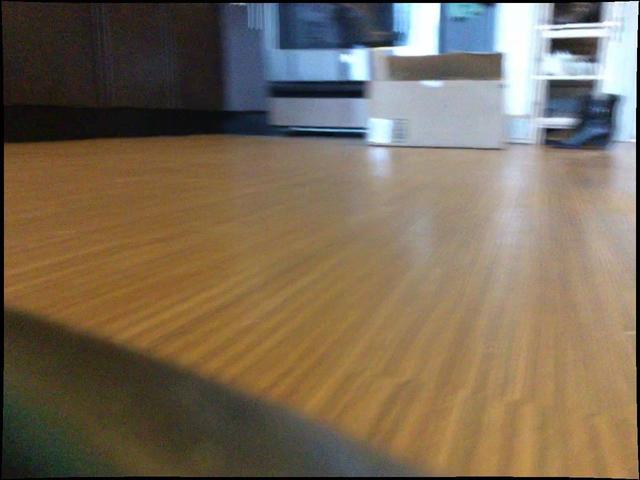}
	\end{subfigure}
	\vskip\baselineskip
	\vskip\baselineskip
	\begin{subfigure}[t]{0.46\linewidth}
		\includegraphics[width=\linewidth]{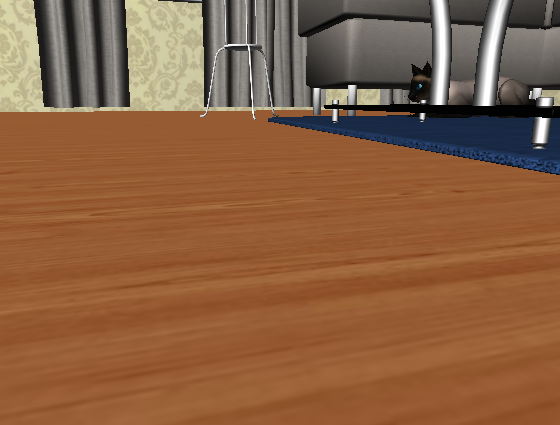}
	\end{subfigure}
	\begin{subfigure}[t]{0.46\linewidth}
		\includegraphics[width=\linewidth]{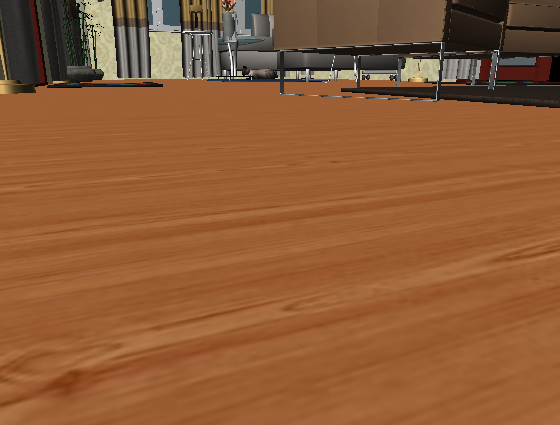}
	\end{subfigure}
	\\
	\begin{subfigure}[t]{0.46\linewidth}
		\includegraphics[width=\linewidth]{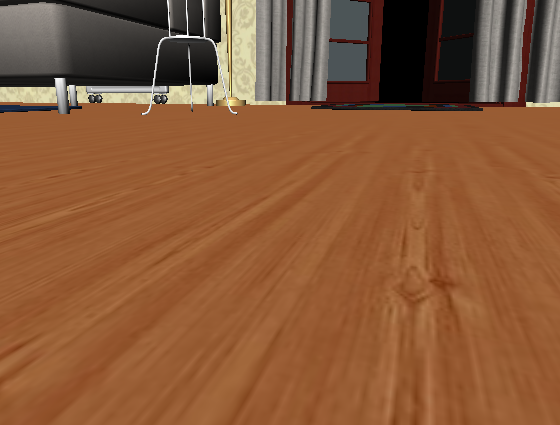}
	\end{subfigure}
	\begin{subfigure}[t]{0.46\linewidth}
		\includegraphics[width=\linewidth]{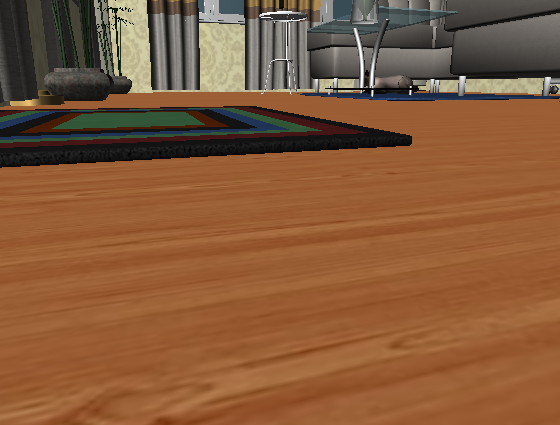}
	\end{subfigure}
	\caption{\textbf{Top}: RGB images collected using an iPad Air2 camera attached to a Roomba robot.  We semantically annotated these images and used the \datamatch algorithm to generate matching viewpoints in a 3D scene.  \textbf{Bottom:} Generated viewpoints which exhibit the same view distribution as the input data.  Note that depth data was not used, and the algorithm was not changed in any way to account for differences between Roomba viewpoints and human viewpoints.}
	\label{fig:roombacam}
\end{figure*}

\section{User Study Details}

\Cref{fig:user_study} shows screenshots of the interface that we used to carry out the user study asking people to select generated views that match the NYUDv2 viewpoints.  Each participant saw 30 randomly sampled sets of viewpoints, and matched them against a randomly sampled set of NYUDv2 images from the same room category.

\begin{figure*}
	\centering
	\begin{subfigure}[t]{0.8\linewidth}
		\includegraphics[width=\linewidth]{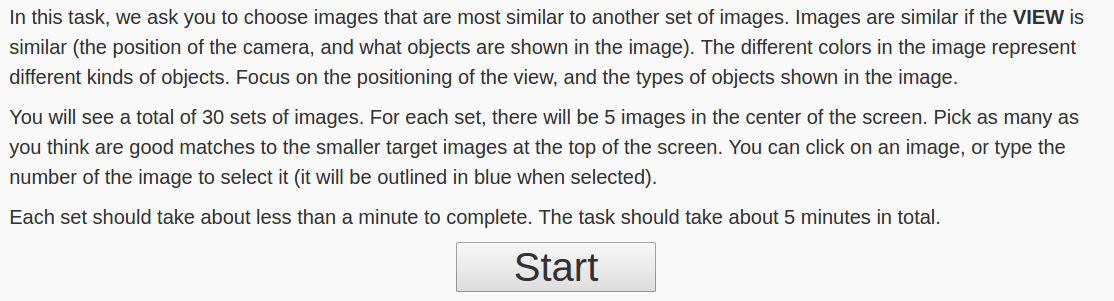}
	\end{subfigure}
	\\
	\begin{subfigure}[t]{0.8\linewidth}
		\includegraphics[width=\linewidth]{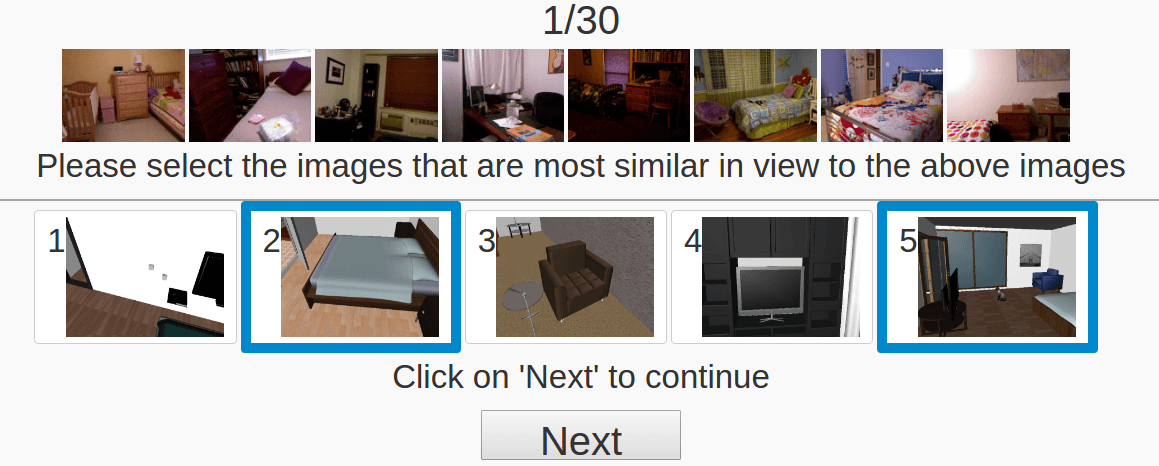}
	\end{subfigure}
	\\
	\begin{subfigure}[t]{0.8\linewidth}
		\includegraphics[width=\linewidth]{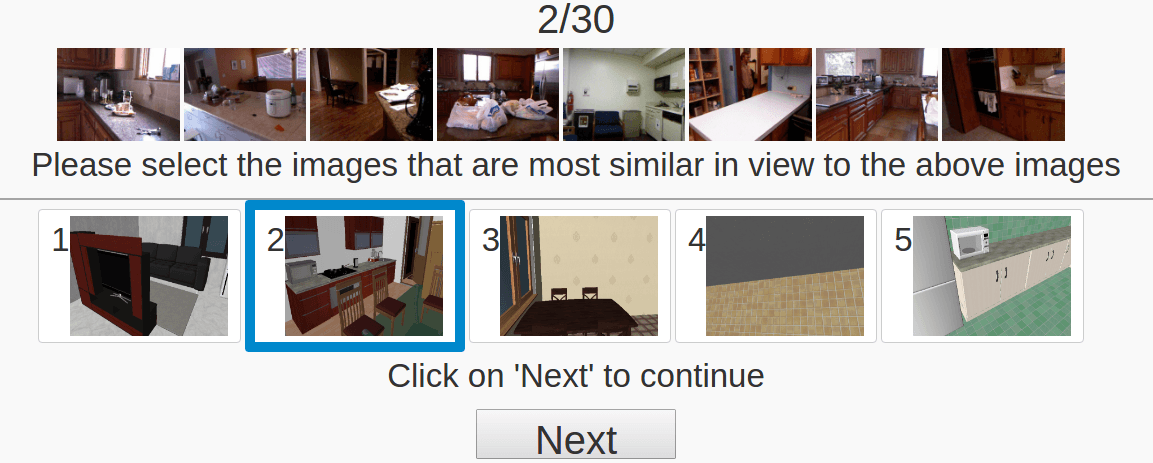}
	\end{subfigure}
	\\
	\begin{subfigure}[t]{0.8\linewidth}
		\includegraphics[width=\linewidth]{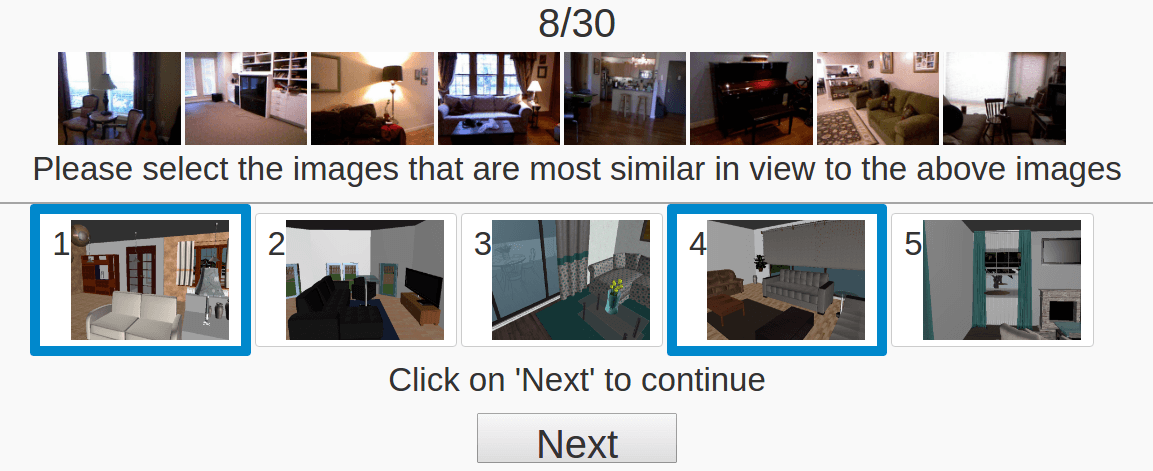}
	\end{subfigure}
	\caption{Screenshots of the instructions and interface for the view set match selection user study we carried out.  Participants were instructed to select generated viewpoints images in the bottom row that match the viewpoints of the top example images from NYUDv2.}
	\label{fig:user_study}
\end{figure*}

\section{Constants}

The 40 weights $w_c$ per category were chosen to approximately rebalance SUNCG frequencies to NYUDv2 frequencies, and are given in Table \ref{table:catweights}. The rebalancing was object occurence in NYUDv2 divided by object occurence in SUNCG. Weights marked N/A correspond to categories not present in SUNCG, and the wall, ceiling, and floor categories were set to a weight of 1.  The candidate generation step used 1500 candidates per room, 200 raycasts per voxel, and filtering down to 20 candidates per room. These settings resulted in roughly comparable runtime to the \heur algorithm.

When generating images from only a single image or handful of images as is the case in this supplemental document, the constants and \datamatch algorithm are adjusted to compensate for the increased noise in the semantic category pdfs. First, for each exemplifying image in the input, a univariate gaussian with standard deviation equal to 10\% of the histogram resolution is added to each semantic category's pdf along the depth axis, rather than a single point per pixel. Next, to compensate for the narrower distribution being matched, the candidate sample count is increased to 3500 to ensure good samples are still found. The candidate image vector likelihood values are not divided by the observation frequency; this helps stabilize filtering when the input image count is small. Finally, as the distribution is no longer intended to match NYUDv2, uniform category weights $w_c$ are used. Note also that as the images generated for the figures here are created from a single room's output rather than that of 20,000 rooms, the SELECT portion of the \datamatch algorithm is superfluous.

\begin{table*}
	\centering
\begin{tabular}{||lr||}
	\hline
wall & 1.0000\\
floor & 1.0000\\
cabinet & 1.0770\\
bed & 0.6267\\
chair & 0.8133\\
sofa & 0.7622\\
table & 0.7552\\
door & 0.1293\\
window & 0.2678\\
bookshelf & 6.3694\\
picture & 2.3617\\
counter & 0.2667\\
blinds & 1.6238\\
desk & 0.6391\\
shelves & 0.4318\\
curtain & 0.4273\\
dresser & 0.5381\\
pillow & 11.8598\\
mirror & 0.7994\\
floor\_mat & 0.2805\\
clothes & 38.6042\\
ceiling & 1.0000\\
books & 4.0946\\
refridgerator & 0.5520\\
television & 0.1626\\
paper & N/A\\
towel & N/A\\
shower\_curtain & 0.1942\\
box & N/A\\
whiteboard & 1.6733\\
person & 0.3813\\
night\_stand & 0.2648\\
toilet & 0.1204\\
sink & 0.4990\\
lamp & 0.1518\\
bathtub & 0.2869\\
bag & N/A\\
otherstructure & 2.8287\\
otherfurniture & 497.8754\\
otherprop & 1.1364\\
\hline
\end{tabular}
\caption{Category weights $w_c$ for the NYU40 categories.  These weights were chosen to rebalance the category occurrence frequencies of SUNCG to NYUDv2 images.}
\label{table:catweights}
\end{table*}

\section{Runtime}

The runtime of the \datamatch algorithm is comparable to that of other methods. When run on 20,000 selected scenes in SUNCG, the total candidate generation time was 898 hours, 37 minutes; this was run in parallel on a cluster with approximately 500 simultaneously allocated cores. In addition, the set selection stage required 22 minutes, 11 seconds. By comparison, the total runtime for the \heur method on the same dataset was 880 hours, 54 minutes. For both algorithms, the primary computational bottleneck is rendering candidate images in order to score them. Note that the times reported above do not include rendering the selected images to disk for training, as OpenGL renders are used and therefore this is an IO-bound task. 
\end{document}